\def\BibTeX{{\rm B\kern-.05em{\sc i\kern-.025em b}\kern-.08em
		T\kern-.1667em\lower.7ex\hbox{E}\kern-.125emX}}
\newcommand{\EQ}{\begin{eqnarray}}
\newcommand{\EN}{\end{eqnarray}}
\newtheorem{theorem}{Theorem}
\newcommand*{\rowstyle}[1]{
	\gdef\@rowstyle{\leavevmode#1}%
	\@rowstyle\ignorespaces}
\begin{document} 
%
\title{ACIL: Analytic Class-Incremental Learning with Absolute Memorization and Privacy Protection}

\author{
\IEEEauthorblockN{Huiping Zhuang, Zhenyu Weng\IEEEauthorrefmark{1}, Hongxin Wei, Renchunzi Xie,
Kar-Ann Toh,
Zhiping Lin
}
\IEEEauthorblockA{}
\thanks{$^{*}$Contacting. \textit{Email address}: zhenyu.weng@ntu.edu.sg.}
}
\maketitle

\begin{abstract}
	Class-incremental learning (CIL) learns a classification model with training data of different classes arising progressively. Existing CIL either suffers from serious accuracy loss due to catastrophic forgetting, or invades data privacy by revisiting used exemplars. Inspired by linear learning formulations, we propose an analytic class-incremental learning (ACIL) with absolute memorization of past knowledge  while avoiding breaching of data privacy (i.e., without storing historical data). The absolute memorization is demonstrated in the sense that class-incremental learning using ACIL given present data would give identical results to that from its joint-learning counterpart which consumes both present and historical samples. This equality is theoretically validated. Data privacy is ensured since no historical data are involved during the learning process. Empirical validations demonstrate ACIL's competitive accuracy performance with near-identical results for various incremental task settings (e.g., 5-50 phases). This also allows ACIL to outperform the state-of-the-art methods for large-phase scenarios (e.g., 25 and 50 phases).
\end{abstract}

\section{Introduction}
\label{section_introduction}
Class-incremental learning (CIL) \cite{DMC2020_WACV,iCaRL2017_CVPR} trains a network phase-by-phase with training data in each phase having distinctive classes. The CIL has received an increasing popularity owing to the need to adapt learned models to unseen data classes without needing to train from scratch, allowing resource-saving and environmentally-friendly machine learning. Developing CIL is a natural call in our dynamic world where data and respective target category or task are usually available in a specific location or time slot. In addition, the CIL is intuitively motivated as it resembles real human learning processes where a person could continuously adopt knowledge of new object categories on top of the learned information.

Merits of CIL come with costs. The CIL could struggle with the notorious \textit{catastrophic forgetting} \cite{cil_review2021NNs}, rendering the network losing grasp of the learned knowledge when accepting new tasks, which is also known as the \textit{task-recency} bias. To mitigate the forgetting issue, several branches of CIL methods, such as the bias correction-based \cite{EEIL2018_ECCV} and replay-based (or exemplar-based) \cite{iCaRL2017_CVPR} CIL, have been proposed. These CIL techniques are allowed to store a small number of samples from previous tasks to fight the forgetting of old knowledge. In particular, the replay-based CIL has achieved the state-of-the-art performance \cite{RMM2021NeuriPS}. However, such competitive results have been obtained at the cost of revisiting the historical samples, which has brought concerns in terms of data privacy protection. 

\textbf{Data Privacy in CIL.} Data privacy is becoming more of value in our interconnected modern world, which naturally applies to CIL problems asking for exemplar-free learning. Note that the ``privacy'' in CIL (i.e., cannot re-use past exemplars) may be different from the definition of other fields (such as data encryption). The increasing concern of data privacy contradicts many existing CIL techniques, such as the replay-based CIL. Several methods from the regularization-based CIL \cite{LwF2018TPAMI} respect privacy as they only impose regularization terms on the loss functions. However, without re-accessing the trained samples, their accuracy performance cannot compete with that of the replay-based CIL. Another exemplar-free CIL branch is the generative adversarial network (GAN)-based learning \cite{MemGAN2018NeurIPS}, which preserves privacy by generating historical samples using GANs. This CIL relies heavily on GANs' performance and has not been tested in challenging datasets such as ImageNet \cite{imagnet2009CVPR}.

In summary, existing CIL techniques either invade data privacy (e.g., replay-based CIL) or cannot provide satisfactory accuracy performance (e.g., regularization-based CIL). In addition, as the forgetting issue persists (though mitigated), CIL's performance experiences a significant degradation as learning phases increase, a pattern shared by many existing CIL techniques (e.g., \cite{podnet2020ECCV}). The performance degradation escalates in large-phase scenarios---a learning scenario with a large number of learning phases for increment (e.g., 50 learning phases \cite{DER2021CVPR}). This has motivated us to find new CIL methods that well tackles the catastrophic forgetting without invading data privacy. 

In this paper, we propose an analytic class-incremental learning (ACIL) to handle the issue of forgetting and privacy invasion in CIL. The ACIL is inspired by \textit{analytic learning} \cite{cpnet2021,pil2001}, a technique that formulates network training into learning of linear stacks. The analytic learning component allows the ACIL to conduct CIL in a recursive learning manner that can absolutely memorize the knowledge of every historical sample (i.e., address catastrophic forgetting) while avoiding the breach of data privacy (i.e., without storing any past data). The key contributions are summarized as follows. 

$\bullet$ We introduce the ACIL, which holds absolute memorization of previous knowledge when accepting new tasks.

$\bullet$ The ACIL does not store past samples, thereby achieving data privacy protection, a rare but valuable CIL property.

$\bullet$ We provide theoretical validation of ACIL's absolute memorization, showing that the CIL using ACIL given present data provides an identical result to that from its joint-learning counterpart that adopts data from both present and historical phases.

$\bullet$ Experiments on benchmark datasets show that the ACIL gives competitive CIL results that do not degrade over increment of data classes during  learning phases. In particular, it outperforms the state-of-the-arts by a considerable margin for relatively large-phase scenarios (e.g., 25 or 50 phases).

\section{Related Works}

\subsection{Class-Incremental Learning}

\textbf{Bias correction-based} CIL mainly tries to address the task-recency bias.  The end-to-end incremental learning \cite{EEIL2018_ECCV} reduces the bias by introducing a balance training stage where only an equal number of samples for each class is used. The bias correction (BiC) \cite{BiC2019_CVPR} includes an additional trainable layer which aims to correct the bias. The method named LUCIR proposed in \cite{LUCIR2019_CVPR} fights the bias by changing the softmax layer into a cosine normalization one.

\textbf{Replay-based} CIL stores a small subset of data from the previously accessed tasks to reinforce the network's memory of old knowledge. This CIL branch quickly draws attention due to the appealing ability to resist the catastrophic forgetting. For instance, the PODNet \cite{podnet2020ECCV} adopts an efficient spatial-based distillation loss to reduce forgetting, with a focus on the large-phase setting, achieving reasonably good results. The AANets \cite{AANet_2021_CVPR} employs a new architecture containing a stable block and a plastic block to balance the stability and plasticity. On top of the replay-based CIL, methods exploring exemplar storing techniques \cite{Mnemonics_2020_CVPR} are also fruitful. For instance, the reinforced memory management (RMM) \cite{RMM2021NeuriPS} seeks a dynamic memory management using reinforcement learning. By plugging it onto PODNet and AANets, the RMM attains a state-of-the-art performance.

\textbf{Regularization-based} CIL imposes additional constraints on the loss functions to avoid forgetting. The regularization can be imposed on weights by estimating the parameters' importance so relevant weights do not drift significantly. The elastic weight consolidation (EWC) \cite{EWC2017nas} captures the prior importance using an diagonally approximated Fisher information matrix. The EWC is improved by \cite{EWC2_2018ICPR} through finding a better approximation of the Fisher information matrix. The regularization can also be imposed on activations to prevent activation drift, which outperforms its weight-regularization counterpart in general.  The learning without forgetting (LwF) \cite{LwF2018TPAMI} prevents activations of the old network from drifting while learning new tasks. The less-forgetting learning \cite{LFL2016} penalizes the activation difference except the fully-connected layer.

\textbf{GAN-based} CIL replays past samples by generating them using GANs. The deep generative replay \cite{DGN2017NIPS} generates synthetic samples using an unconditioned GAN. It is later improved by memory replay GAN \cite{MeRGAN2018NIPS} adopting a label-conditional GAN. In general, the GAN-based CIL relies heavily on GAN's generative performance, and is only tested on relatively small datasets, such as MNIST (i.e., handwritten digits).

The bias correction-based and replay-based CILs allow storing exemplars, leading to privacy invasion. The exemplar-free methods (e.g., regularization-based and GAN-based CIL) do not give competitive results. Our ACIL can  memorize historical knowledge without re-accessing the data from previous tasks, allowing it to perform CIL with absolute memorization and privacy reservation.

\subsection{Analytic Learning}\label{subsection_analytic-learning}
The analytic learning  has been developed to circumvent limitations imposed by back-propagation (BP), such as gradient vanishing/exploding, divergence during iteration and long training time (i.e., need many epochs). The analytic learning also goes by other names such as \textit{pseudoinverse learning} \cite{pil2001} due to the use of matrix inverse. The analytic learning starts with the shallow learning. One quick example is the radial basis network \cite{rbf1991univ}, which trains the parameters using a least-squares (LS) estimation after conducting a kernel transformation in the first layer. The multilayer analytic learning \cite{karnet2018,analytic-learning-finallayer2019noniterative} converts the nonlinear network learning into linear segments that can be solved adopting LS techniques in a one-epoch training style. For instance, the dense pseudoinverse autoencoder \cite{DensePILAE2021journal} trains a stacked autoencoder layer-by-layer by concatenating shallow and deep features using LS solutions. The analytic learning could experience out-of-memory issue as the weights are learned involving the entire dataset at once. Such a memory issue can be addressed by the block-wise recursive Moore-Penrose learning (BRMP) \cite{brmp2021} by replacing the joint learning with a recursive one. This  much resembles the replacement of gradient descent with stochastic gradient descent to reduce memory usage, but differs in that the BRMP can exactly reproduce its joint-learning result.

The analytic learning and its recursive formulation (e.g., BRMP) brings inspiration to the CIL realm. The BRMP can stream new samples to update the weight without weakening the impact of previous samples. This matches the ACIL's need for the enhanced memorization of previously trained data. By bridging the analytic learning and its recursive formulation, our ACIL can be built to absolutely remember historical samples, without needing to re-access them.
\begin{figure*}
	\centering
	\includegraphics[width=1\linewidth]{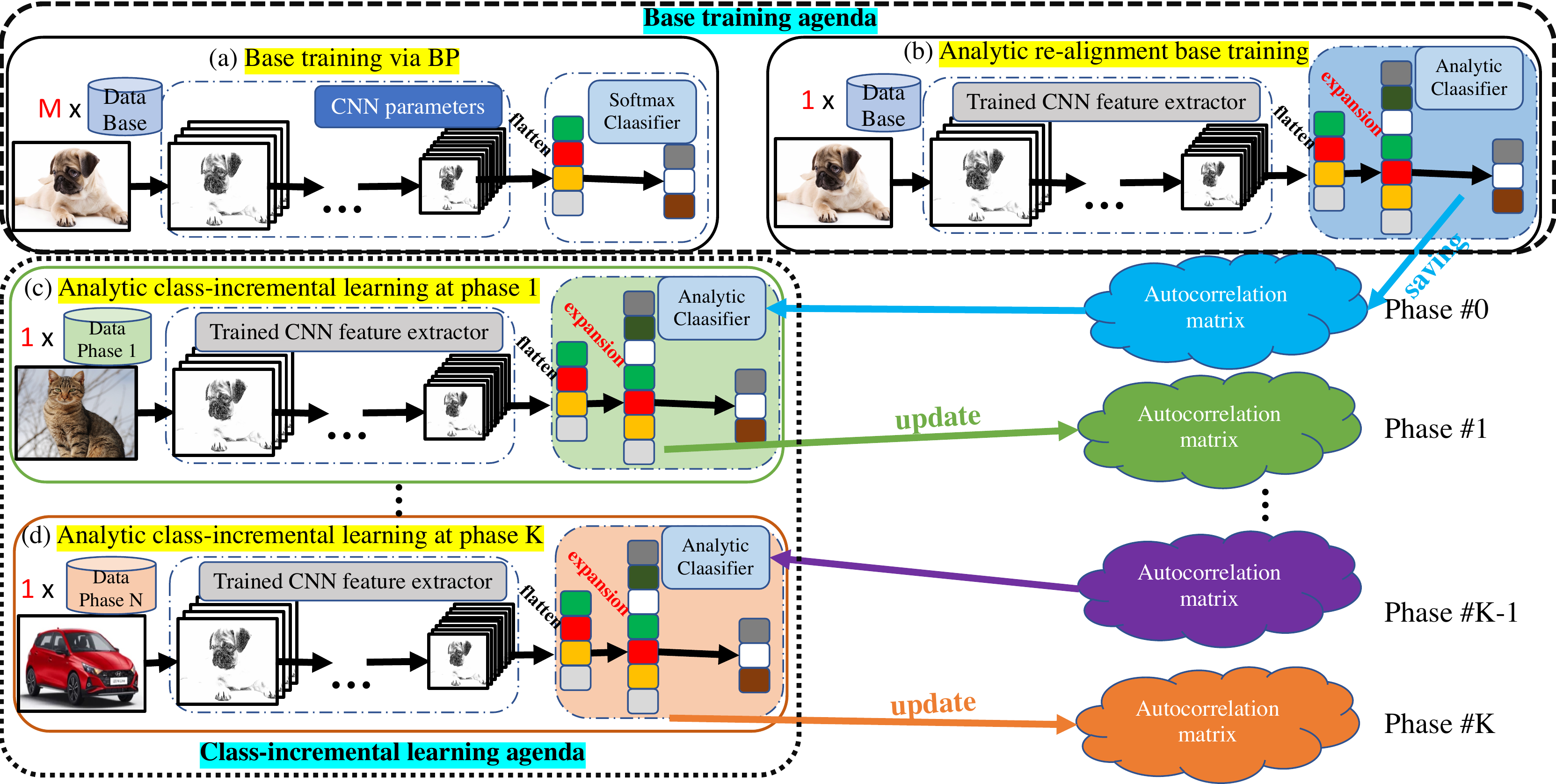}
	\caption{The ACIL begins with the \textbf{base training agenda}: (a) training a network with BP-based iteration method for $M$ epochs on the base dataset, followed by (b) ARaBT for $1$ epoch only on the same dataset, which expands the FCN dimension to enhance feature extraction. Subsequently, (c-d) the \textbf{CIL agenda} is conducted in a recursive manner adopting the dataset (train for $1$ epoch) at the current phase only and a correlation matrix (see definition in \eqref{eq_R_k_1}) encrypted with historical information.}
	\label{fig:acilflow}
\end{figure*}

\section{The Proposed Method}
This section presents the algorithmic details of ACIL, including a \textit{base training agenda} and a \textit{CIL agenda}. Our presentation of ACIL is mainly rooted in CNNs which contain a CNN backbone (feature extractor) followed by a fully-connected network (FCN) layer (classifier) for classification problems. An overview of ACIL is depicted in Figure \ref{fig:acilflow}.

\subsection{The Base Training Agenda}
The base training agenda of ACIL has two stages in a sequential oder, namely a base training via BP and an analytic re-alignment base training (ARaBT), which are illustrated in Figure \ref{fig:acilflow}(a) and Figure \ref{fig:acilflow}(b) respectively.

\textbf{Base Training via BP}. The first stage (see Figure \ref{fig:acilflow}(a)) of the base training agenda duplicates the conventional BP training on the base dataset. That is, the network is trained with a BP-based iteration algorithm (e.g., SGD with momentum) for multiple epochs with an appropriate learning rate scheduler (e.g., step decay scheduler).  Let $\bm{W}_{\text{CNN}}$ and $\bm{W}_{\text{FCN}}$ represent the weights for the CNN backbone and the FCN classifier. After the BP-base base training, given an input $\bm{X}$, the output of the network is
\begin{align*}
	\bm{Y} = f_{\text{softmax}}(f_{\text{flat}}(f_{\text{CNN}}(\bm{X}, \bm{W}_{\text{CNN}}))\bm{W}_{\text{FCN}})
\end{align*}
where $f_{\text{CNN}}(\bm{X}, \bm{W}_{\text{CNN}})$ indicates the output of the CNN backbone with an $\bm{X}$ passing through it; $f_{\text{flat}}$ is a flattening operator, which reshapes a training sample into a 1-D vector; $f_{\text{softmax}}$ is the softmax function.

\textbf{Analytic Re-alignment Base Training}. The second stage of base training (see Figure \ref{fig:acilflow}(b)), the ARaBT, is the key to the formulation of ACIL. In this stage, the ARaBT ``re-aligns'' the network's learning to match the learning dynamics of an analytic learning.

Prior to our development, some definitions related to CIL are introduced. A $K$-phase CIL indicates that a network is trained for $K$ phases where training data of each phase comes with different classes. Let $\mathcal{D}_{k}^{\text{train}}\sim \{\bm{X}_{k}^{\text{train}}, \bm{Y}_{k}^{\text{train}}\}$ and $\mathcal{D}_{k}^{\text{test}}\sim \{\bm{X}_{k}^{\text{test}}, \bm{Y}_{k}^{\text{test}}\}$ be the training and testing datasets at phase $k$ ($k=1,2,\dots,K$). $\bm{X}_{k}\in\mathbb{R}^{N_{k}\times w\times h \times c}$ (e.g., images with a shape of $w\times h \times c$) and $\bm{Y}_{k}\in\mathbb{R}^{N_{k}\times d_{y_{k}}}$ (with phase $k$ including $d_{y_{k}}$ classes) are stacked input and label (one-hot) tensors. Specifically, $\mathcal{D}_{0}^{\text{train}}\sim \{\bm{X}_{0}^{\text{train}}, \bm{Y}_{0}^{\text{train}}\}$ represents the base training set, which will be utilized to conduct the ARaBT. 

The first step is to extract the feature matrix (denoted by $\bm{X}_{0}^{\text{(cnn)}}$) by feeding the input tensor $\bm{X}_{0}^{\text{train}}$ through the trained CNN backbone, followed by a flattening operation, i.e.,
\begin{align}\label{eq_flatten_fea}
	\bm{X}_{0}^{\text{(cnn)}} = f_{\text{flat}}(f_{\text{CNN}}(\bm{X}^{\text{train}}_{0}, \bm{W}_{\text{CNN}}))
\end{align}
where $\bm{X}_{0}^{\text{(cnn)}}\in\mathbb{R}^{N_{0}\times d_{\text{cnn}}}$. Instead of building one FCN layer to map the feature onto the classification terminal, we conduct a \textit{feature expansion} (FE) process by inserting an additional FCN layer which expands the feature space into a higher one. That is, the feature $\bm{X}_{0}^{\text{(cnn)}}$ is expanded to $\bm{X}_{0}^{\text{(fe)}}$ as follows
\begin{align}\nonumber
	\bm{X}_{0}^{\text{(fe)}} &= f_{\text{act}}(f_{\text{flat}}(f_{\text{CNN}}(\bm{X}_{0}^{\text{train}}, \bm{W}_{\text{CNN}}))\bm{W}_{\text{fe}})\\\label{eq_fea_base}
	&=f_{\text{act}}(\bm{X}_{0}^{\text{(cnn)}}\bm{W}_{\text{fe}})
\end{align}
where $\bm{X}_{0}^{\text{(fe)}}\in\mathbb{R}^{N_{0}\times d_{\text{(fe)}}}$ with $d_{\text{(fe)}}$ being the \textit{expansion size} (with $d_{\text{cnn}}\le d_{\text{(fe)}}$). $f_{\text{act}}$ is an activation function (we adopt ReLU in this paper), and $\bm{W}_{\text{fe}}$ is the FE matrix expanding the CNN-extracted feature. The need for FE process can be justified by the fact that analytic-learning methods require more parameters to achieve their maximum performance. For the FE matrix, we determine $d_{\text{(fe)}}$ with a very simple trick by drawing every element from a normal distribution. Such a randomization technique has been shown to capture useful information for classification problems (e.g., see \cite{pil2001,cpnet2021}).

Finally, the expanded feature $\bm{X}_{0}^{\text{(fe)}}$ is mapped onto the label matrix $\bm{Y}_{0}^{\text{train}}$ using a linear regression procedure via solving
\begin{align}\label{eq_L2}
	\underset{\bm{W}_{\text{FCN}}^{(0)}}{\text{argmin}} \quad \left\lVert\bm{Y}_{0}^{\text{train}} - \bm{X}_{0}^{\text{(fe)}}\bm{W}_{\text{FCN}}^{(0)}\right\rVert_{2}^{2} + {\gamma} \left\lVert\bm{W}_{\text{FCN}}^{(0)}\right\rVert_{2}^{2}
\end{align}
where $\left\lVert\cdot\right\lVert_{2}$ indicates the $l_{2}$-norm, and ${\gamma}$ regularizes the above objective function. Also, $\cdot^{\text{T}}$ is the matrix transpose operator. The optimal solution to \eqref{eq_L2} can be found in
\begin{align}\label{eq_ls_w_base}
	\bm{\hat W}_{\text{FCN}}^{(0)} = (\bm{X}_{0}^{\text{(fe)T}}\bm{X}_{0}^{\text{(fe)}}+\gamma \bm{I})^{-1}\bm{X}_{0}^{\text{(fe)T}}\bm{Y}_{0}^{\text{train}}
\end{align}
where $\bm{\hat W}_{\text{FCN}}^{(0)}$ indicates the estimated FCN weight of the final classifier layer.

\subsection{The Class-Incremental Learning Agenda}
With the network learning aligned with the analytic learning (see \eqref{eq_ls_w_base}), we may proceed to CIL in an analytic learning fashion. To this end, assume that we are given $\mathcal{D}_{0}^{\text{train}},\dots,\mathcal{D}_{k-1}^{\text{train}}$, the learning problem in \eqref{eq_L2} can be extended to
\begin{align}\label{eq_ls_k-1}
	\resizebox{1\linewidth}{!}{$
	\underset{\bm{W}_{\text{FCN}}^{(k-1)}}{\text{argmin}} \left\lVert\begin{bmatrix}
		\mathbf{Y}_{0}^{\text{train}}\ \ \mathbf{0}\ \ \mathbf{0}\dots\ \ \mathbf{0}\\
		\bm{0}\ \ \mathbf{Y}_{1}^{\text{train}}\ \ \mathbf{0}\dots\ \ \mathbf{0}\\
		\vdots\\
		\bm{0}\ \ \mathbf{0}\ \ \dots \mathbf{Y}_{k-1}^{\text{train}}
	\end{bmatrix} -
	\begin{bmatrix}
		\bm{X}_{0}^{\text{(fe)}}\\
		\bm{X}_{1}^{\text{(fe)}}\\
		\vdots\\
		\bm{X}_{k-1}^{\text{(fe)}}
	\end{bmatrix} \mathbf{W}_{\text{FCN}}^{(k-1)}\right\rVert_{2}^{2} + {\gamma} \left\lVert\bm{W}_{\text{FCN}}^{(k-1)}\right\rVert_{2}^{2}$}
\end{align}
where
\begin{align}\label{eq_trans}
	\bm{X}_{i}^{\text{(fe)}} &= f_{\text{act}}(f_{\text{flat}}(f_{\text{CNN}}(\bm{X}_{i}^{\text{train}}, \bm{W}_{\text{CNN}}))\bm{W}_{\text{fe}}).
\end{align}
Note that the stacked label matrix in \eqref{eq_ls_k-1} has a sparse structure due to the fact that datasets from different phases are mutually exclusive. The solution to \eqref{eq_ls_k-1} can be written as
\begin{align}\label{eq_w_k_1}
	\resizebox{1\linewidth}{!}{$
	\bm{\hat W}_{\text{FCN}}^{(k-1)} = \left(\sum\limits_{i=0}^{k-1}\bm{X}_{i}^{\text{(fe)T}}\bm{X}_{i}^{\text{(fe)}}+\gamma \bm{I}\right)^{-1}\left[\bm{X}_{0}^{\text{(fe)T}}\bm{Y}_{0}\ \ \dots\ \ \bm{X}_{k-1}^{\text{(fe)T}}\bm{Y}_{k-1}\right]$}
\end{align}
where $\bm{\hat W}_{\text{FCN}}^{(k-1)}\in\mathbb{R}^{d_{\text{(fe)}}\times \sum_{i=1}^{k-1}d_{y_{i}}}$ with a column size proportional to $k$.

Equation \eqref{eq_w_k_1} gives an LS-based analytical solution for joint learning on $\mathcal{D}_{0:k-1}^{\text{train}}$. The goal of ACIL is to calculate the analytical solution that satisfies \eqref{eq_ls_k-1} at phase $k$ based on $\bm{\hat W}_{\text{FCN}}^{(k-1)}$ given $\mathcal{D}_{k}^{\text{train}}$ without any samples from $\mathcal{D}_{0:k-1}^{\text{train}}$. Specifically, we aim to obtain $\bm{\hat W}_{\text{FCN}}^{(k)}$ recursively based on $\bm{\hat W}_{\text{FCN}}^{(k-1)}$ and data $\bm{X}_{k}^{\text{(fe)}}, \bm{Y}_{k}^{\text{train}}$ that are available only at the current learning phase. However, the updated weight $\bm{\hat W}_{\text{FCN}}^{(k)}$ must satisfy the joint learning in \eqref{eq_ls_k-1} given $\mathcal{D}_{0:k}^{\text{train}}$. Let 
\begin{align}\label{eq_R_k_1}
	\bm{R}_{k-1} = ({\sum}_{i=0}^{k-1}\bm{X}_{i}^{\text{(fe)T}}\bm{X}_{i}^{\text{(fe)}}+\gamma \bm{I})^{-1}
\end{align}
be the \textit{regularized feature autocorrelation matrix} (RFAuM) at learning phase $k-1$. Then our solution can be summarized in the following Theorem.

\begin{theorem}\label{thm_rls}
	\textup{
		The FCN weight recursively obtained by 
		\begin{align}\label{eq_w_update}
			\bm{\hat W}_{\text{FCN}}^{(k)}
			=\left[\bm{\hat W}_{\text{FCN}}^{(k-1)} -  \bm{R}_{k}\bm{X}_{k}^{\text{(fe)T}}\bm{X}_{k}^{\text{(fe)}} \bm{\hat W}_{\text{FCN}}^{(k-1)}\ \ \   \bm{R}_{k}\bm{X}_{k}^{\text{(fe)T}}\bm{Y}_{k}^{\text{train}}\right]
		\end{align}
		is identical to that obtained by \eqref{eq_w_k_1} at phase $k$. The RFAuM $\bm{R}_{k}$ can also be recursively calculated by
		\begin{align}\label{eq_R_update}
			\bm{R}_{k} =  \bm{R}_{k-1} - \bm{R}_{k-1}\bm{X}_{k}^{\text{(fe)T}}(\bm{I} + \bm{X}_{k}^{\text{(fe)}}\bm{R}_{k-1}\bm{X}_{k}^{\text{(fe)T}})^{-1}\bm{X}_{k}^{\text{(fe)}}\bm{R}_{k-1}
		\end{align}
	}
\end{theorem}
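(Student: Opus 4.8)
\emph{Proof plan.} The plan is to prove the two recursions separately: first the RFAuM update \eqref{eq_R_update}, then the weight update \eqref{eq_w_update}, reducing each claim to a single matrix identity obtained from the definition \eqref{eq_R_k_1} and the closed-form joint solution \eqref{eq_w_k_1}.

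For the RFAuM recursion \eqref{eq_R_update}, I would first observe that the definition \eqref{eq_R_k_1} yields $\bm{R}_{k}^{-1} = \bm{R}_{k-1}^{-1} + \bm{X}_{k}^{\text{(fe)T}}\bm{X}_{k}^{\text{(fe)}}$, i.e., $\bm{R}_{k}^{-1}$ is a symmetric update of $\bm{R}_{k-1}^{-1}$. Applying the Woodbury matrix identity (matrix inversion lemma) with $\bm{R}_{k-1}$ playing the role of the base inverse, $\bm{X}_{k}^{\text{(fe)T}}$ and $\bm{X}_{k}^{\text{(fe)}}$ the outer factors, and an identity inner matrix gives \eqref{eq_R_update} directly. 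This step is essentially an invocation of a standard lemma; the only care needed is checking that the dimensions and the $\bm{I}$ inside $(\bm{I} + \bm{X}_{k}^{\text{(fe)}}\bm{R}_{k-1}\bm{X}_{k}^{\text{(fe)T}})$ are consistent.

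For the weight recursion \eqref{eq_w_update}, I would write the phase-$k$ joint solution \eqref{eq_w_k_1} as $\bm{\hat W}_{\text{FCN}}^{(k)} = \bm{R}_{k}[\bm{X}_{0}^{\text{(fe)T}}\bm{Y}_{0}\ \cdots\ \bm{X}_{k-1}^{\text{(fe)T}}\bm{Y}_{k-1}\ \ \bm{X}_{k}^{\text{(fe)T}}\bm{Y}_{k}]$. The crucial structural fact, stemming from the block-sparse label matrix in \eqref{eq_ls_k-1}, is that advancing from phase $k-1$ to phase $k$ merely \emph{appends} the new block column $\bm{X}_{k}^{\text{(fe)T}}\bm{Y}_{k}$ to the cross-correlation, leaving the first $k$ block columns unchanged; only the leading inverse changes from $\bm{R}_{k-1}$ to $\bm{R}_{k}$. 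The last block column then immediately matches the $\bm{R}_{k}\bm{X}_{k}^{\text{(fe)T}}\bm{Y}_{k}^{\text{train}}$ term of \eqref{eq_w_update}. For the remaining columns, setting $\bm{B}_{k-1}=[\bm{X}_{0}^{\text{(fe)T}}\bm{Y}_{0}\ \cdots\ \bm{X}_{k-1}^{\text{(fe)T}}\bm{Y}_{k-1}]$ so that $\bm{\hat W}_{\text{FCN}}^{(k-1)}=\bm{R}_{k-1}\bm{B}_{k-1}$, the claim reduces to $\bm{R}_{k}\bm{B}_{k-1} = (\bm{I}-\bm{R}_{k}\bm{X}_{k}^{\text{(fe)T}}\bm{X}_{k}^{\text{(fe)}})\bm{\hat W}_{\text{FCN}}^{(k-1)}$, and hence to the single identity $\bm{R}_{k}=\bm{R}_{k-1}-\bm{R}_{k}\bm{X}_{k}^{\text{(fe)T}}\bm{X}_{k}^{\text{(fe)}}\bm{R}_{k-1}$.

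I would verify this last identity by left-multiplying both sides by $\bm{R}_{k}^{-1}=\bm{R}_{k-1}^{-1}+\bm{X}_{k}^{\text{(fe)T}}\bm{X}_{k}^{\text{(fe)}}$, whereupon the right-hand side collapses to $\bm{I}$; substituting back establishes \eqref{eq_w_update}. I expect the main obstacle to be not any deep computation but the careful bookkeeping of the block structure, specifically justifying that the cross-correlation of the earlier phases is exactly preserved so that $\bm{B}_{k-1}$ is a genuine common factor, and that the sparse label matrix produces the block-column form used throughout. Once that is in place, both recursions follow from the inversion lemma and a one-line verification.
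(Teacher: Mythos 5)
Your proposal is correct: the Woodbury identity applied to $\bm{R}_{k}^{-1}=\bm{R}_{k-1}^{-1}+\bm{X}_{k}^{\text{(fe)T}}\bm{X}_{k}^{\text{(fe)}}$ gives \eqref{eq_R_update}, the block-diagonal label structure does guarantee that the cross-correlation block $\bm{B}_{k-1}$ carries over unchanged, and your reduction of \eqref{eq_w_update} to the identity $\bm{R}_{k}=\bm{R}_{k-1}-\bm{R}_{k}\bm{X}_{k}^{\text{(fe)T}}\bm{X}_{k}^{\text{(fe)}}\bm{R}_{k-1}$ (verified by left-multiplying by $\bm{R}_{k}^{-1}$) is sound. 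This is essentially the same route as the paper's (supplementary) proof, which follows the standard recursive least-squares derivation inherited from the BRMP framework it cites; your pre-multiplication trick merely streamlines the usual step of substituting the Woodbury expansion into the weight update.
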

\begin{proof}
	See the supplementary materials.
\end{proof}

\begin{algorithm}
	\caption{ACIL}
	\label{alg:acil}
		\begin{algorithmic}
			\STATE {\bfseries Base training agenda:} with $\mathcal{D}_{0}^{\text{train}}$. \\
			\STATE 1. Conventional training with BP on base dataset.\\
			\STATE 2. ARaBT: \textcolor{blue}{i)} Obtain feature matrix with \eqref{eq_fea_base}; \textcolor{blue}{ii)} Obtain re-aligned weight $\bm{\hat W}_{\text{FCN}}^{(0)}$ with \eqref{eq_ls_w_base}.  \textcolor{blue}{iii)} Save RFAuM $\bm{R}_{0}$.\\
			\quad\\
			\STATE {\bfseries CIL agenda:}
			\FOR{$k=1$ {\bfseries to} $K$ (with $\mathcal{D}_{k}^{\text{train}}$, $\bm{\hat W}_{\text{FCN}}^{(k-1)}$ and  $\bm{R}_{k-1}$)}
			\STATE \textcolor{blue}{i)} Obtain feature matrix with \eqref{eq_trans};
			\STATE \textcolor{blue}{ii)} Update RFAuM $\bm{R}_{k}$ with \eqref{eq_R_update};
			\STATE \textcolor{blue}{iii)} Update weight matrix $\bm{\hat W}_{\text{FCN}}^{(k)}$ with \eqref{eq_w_update};
			\ENDFOR
		\end{algorithmic}
\end{algorithm}

As shown in Theorem \ref{thm_rls}, the proposed ACIL constructs a recursive update of the FCN weight matrix without any loss of historical information. One can first conduct the base training agenda on the base dataset (e.g., compute $\bm{\hat W}_{\text{FCN}}^{(0)}$), and perform CIL afterwards adopting the recursive formulation to obtain $\bm{\hat W}_{\text{FCN}}^{(k)}$ for $k>0$. The computational steps of ACIL is summarized in Algorithm \ref{alg:acil}. 

\textbf{Absolute Memorization}. As observed in Theorem \ref{thm_rls}, the CIL in \eqref{eq_w_update} yields an identical result to that of the joint learning in \eqref{eq_w_k_1}. This allows the ACIL to operate with \textit{absolute memorization} in the sense that the recursive formulation (i.e., the incremental learning) gives the same answer as the one obtained by its joint analytic learning counterpart. Such an absolute memorization differentiates our method from the existing CIL techniques that are struggling to fight the forgetting issue. To the best of our knowledge, the ACIL is the first CIL that achieves absolute memorization.

\textbf{Data Privacy Protection}. Another benefit of our ACIL lies in privacy protection. Algorithm \ref{alg:acil} shows that, during the CIL agenda, no historical samples are granted. Instead, the $\bm{R}_{k}$ is cached to encrypt information for historical samples. However, it is impossible to reverse-engineer the process to obtain the original samples based on the $\bm{R}_{k}$ only, avoiding possible breaching of data privacy. This is an attractive feature as data privacy has attracted increasing concern in the CIL community. 

Although methods in regularization-based CIL (e.g., LwF) could also protect data privacy, the accuracy performance is less ideal. In comparison, as latter shown in the experiments (e.g., Table \ref{table_avg_acc}), the proposed ACIL preserves data privacy while achieving very competitive results. In addition, the RFAuM holds a fixed shape (i.e., a square matrix of $\mathbb{R}^{d_{\text{ce}}\times d_{\text{ce}}}$) regardless of the sample size. This takes up less storage room than that of the replay-based CIL.

\textbf{An Analytic-Learning Branch of CIL.} We may categorize the ACIL into a new branch (i.e., analytic-learning branch) of CIL. Unlike other branches, the ACIL does not forget any historical information at all. It also attains privacy protection, a rare but valuable CIL feature. Even with several appealing features, the ACIL is naturally not as powerful as the BP-based joint learning. The ACIL is facilitated but also constrained by the fact that it freezes the training of CNN weights. As seen in \eqref{eq_trans}, the feature matrix $\bm{X}_{i}^{\text{(fe)}}$ during the CIL agenda is constructed purely based on a transfer learning w.r.t. the CNN backbone trained on the base dataset. That is, the ACIL extracts the feature of new task classes using a somewhat obsolete feature extractor. This would lead to certain performance drop. However, we would argue that the benefits of ACIL greatly outweigh the potential accuracy loss. Our argument are well supported by the experiments, displaying very competitive results using ACIL. 

\section{Experiments}
We evaluate the proposed ACIL on CIFAR-100, ImageNet-Subset and ImageNet-Full datasets which are benchmark datasets for CIL. We compare the ACIL with several state-of-the-art CIL techniques, including LwF \cite{LwF2018TPAMI}, BIC \cite{BiC2019_CVPR}, iCaRL \cite{iCaRL2017_CVPR}, LUCIR \cite{LUCIR2019_CVPR}, Mnemonics \cite{Mnemonics_2020_CVPR}, PODNet \cite{podnet2020ECCV}, AANets \cite{AANet_2021_CVPR} and RMM \cite{RMM2021NeuriPS}. GAN-based CIL is not included as it is only tested on less challenging datasets (e.g., MNIST) and its performance relies heavily on the GAN training.

\subsection{Datasets and Implementation Details}
\textbf{Datasets.} CIFAR-100 contains 100 classes of $32\times 32$ color images with each class having 500 and 100 images for training and testing respectively. ImageNet-Full has 1000 classes, and 1.3 million images for training with 50,000 images for testing. ImageNet-Subset, in particular, is constructed by selecting 100 specific classes from ImageNet-Full based on what defined in \cite{podnet2020ECCV}. 

\textbf{Network Architecture.} The architectures for CIL in the experiments are ResNet-32 on CIFAR-100 and ResNet-18 on both ImageNet-Full and its subset. These two architectures are commonly adopted for CIL performance comparison. Our ACIL imposes a slight change (i.e., inserts an expansion FCN layer), but the CNN backbones are identical to those from the selected ResNet architectures.

\textbf{Training Details.} For conventional BP training in the base training agenda, we train the network using SGD for 160 (90) epochs for ResNet-32 (ResNet-18). The learning rate starts at 0.1 and it is divided by 10 at epoch 80 (30) and 120 (60). We adopt a momentum of 0.9 and weight decay of $5\times10^{-4}$ ($1\times 10^{-4}$) with a batch size of 128. The input data are augmented with random cropping, random horizontal flip and normalizing. For fair comparison, this base training setting is identical to that of many CIL methods (e.g., \cite{LUCIR2019_CVPR,Mnemonics_2020_CVPR}). For the ARaBT and ACIL's incremental learning steps, no data augmentation is adopted, and the training ends within only one epoch. The results for the ACIL are measured by the average of 3 runs on an RTX 2080Ti GPU workstation.

\textbf{CIL Protocol.} We follow the protocol adopted in \cite{podnet2020ECCV,AANet_2021_CVPR}. The network is first trained (i.e., phase \#0) on the base dataset containing half of the full classes from the original dataset. Subsequently, the network gradually learns the remaining classes evenly for $K$ phases (i.e., $K$-phase CIL), with the dataset in each phase containing disjoint classes from one another. Most existing methods only report results for $K=5,10,25$. We include $K=50$ as well to validate ACIL's absolute memorization.

\begin{table*}
	\caption{Comparison of $\mathcal{\bar A}$ and $\mathcal{F}$ among compared methods. The ACIL adopts $d_{y_{k}}=8\text{k}, 15\text{k}, 15\text{k}$ (``1k''=1000) on CIFAR-100, ImageNet-Subset and ImageNet-Full respectively. The ACIL and LwF do not keep old data while other compared methods adopt the same replay settings (e.g., \cite{podnet2020ECCV,iCaRL2017_CVPR}) by reserving 20 exemplars per old class. Results for $\mathcal{\bar A}(\%)$ are duplicated from \cite{AANet_2021_CVPR}) except for the 3-combo method ``POD+AANets+RMM'' which is copied from the RMM paper \cite{RMM2021NeuriPS} (its ImageNet-Full results are not listed due to no ImageNet option in the source code). Results for $\mathcal{F}(\%)$ are cloned from \cite{Mnemonics_2020_CVPR}. The strict-memory setting results can be found in Table A in the supplementary materials.}
	\label{table_avg_acc}
	\centering
	\resizebox{1\textwidth}{!}{
		\begin{tabular}{clccccccccccccccc}
			\toprule
			\multirow{2}{*}{Metric}&\multirow{2}{*}{Method} &\multirow{2}{*}{Privacy}& \multicolumn{4}{c}{\textit{CIFAR-100}} &  & \multicolumn{4}{c}{\textit{ImageNet-Subset}} &  & \multicolumn{4}{c}{\textit{ImageNet-Full}} \\ \cline{4-7} \cline{9-12} \cline{14-17} 
			&&& K=5    & 10    & 25    & 50   &  & K=5      & 10     & 25     & 50     &  & K=5     & 10     & 25     & 50    \\ \hline
			\multirow{8}{*}{$\mathcal{\bar A}$(\%)}&LwF (TPAMI 2018)   &\checkmark&49.59&46.98&45.51&-&&53.62&47.64&44.32&-&&51.50&46.89&43.14&-\\
			&BiC (CVPR 2019)&$\times$&59.36&54.20&50.00&-&&70.07&64.96&57.73&-&&62.65&58.72&53.47&-\\
			&iCaRL (CVPR 2017) &$\times$&57.12&52.66&48.22&-&&65.44&59.88&52.97&-&&51.50&46.89&43.14&-\\
			&LUCIR (CVPR 2019) &$\times$&63.17&60.14&57.54&-&&70.84&68.32&61.44&-&&64.45&61.57&56.56&-\\
			&PODNet (ECCV 2020)&$\times$&64.83&63.19&60.72&57.98&&75.54&74.33&68.31&62.48&&66.95&64.13&59.17&-\\ 
			&LUCIR+Mnemonics (CVPR 2020) &$\times$&64.95&63.25&63.70&-&&73.30&72.17&71.50&-&&66.15&63.12&63.08&-\\
			&POD+AANets (CVPR 2021)&$\times$&66.31&64.31&62.31&-&&76.96&75.58&71.78&-&&\textbf{67.73}&\textbf{64.85}&61.78&-\\
			&POD+AANets+RMM (NeurIPS 2021) &$\times$&\textbf{68.36}&\textbf{66.67}&64.12&-&&\textbf{79.50} &\textbf{78.11} &\textbf{75.01}&-&&-&-&-&-\\ \cline{2-17} \rowcolor{lightgray!45}
			&ACIL&\checkmark&66.30&66.07&\textbf{65.95}&\textbf{66.01}&&74.81&74.76&74.59&\textbf{74.13}&&65.34&64.84&\textbf{64.63}&\textbf{64.35}\\
			\hline
			\multirow{6}{*}{$\mathcal{F}$(\%)}
			&LwF (TPAMI 2018)  &\checkmark&43.36 &43.58 &41.66&-&&55.32 &57.00 &55.12&-&&48.70 &47.94 &49.84&-\\
			&iCaRL (CVPR 2017)&$\times$&57.12 &34.10 &36.48&-&&43.40 &45.84 &47.60&-&&26.03 &33.76 &38.80&-\\
			&BiC (CVPR2019)&$\times$&31.42 &32.50 &34.60&-&&27.04 &31.04 &37.88&-&&25.06 &28.34 &33.17&-\\
			&LUCIR (CVPR 2019)&$\times$&18.70 &21.34 &26.46&-&&31.88 &33.48 &35.40&-&&24.08 &27.29 &30.30&-\\
			&LUCIR+Mnemonics (CVPR 2020)&$\times$&11.64 &10.90 &9.96&-&&10.20 &9.88 &11.76&-&&13.63 &13.45 &14.40&-\\
			\rowcolor{lightgray!45}
			&ACIL &\checkmark&\textbf{9.00}&\textbf{9.72}&\textbf{9.28}&\textbf{9.32}&&\textbf{3.91}&\textbf{3.40}&\textbf{3.20}&\textbf{3.43}&&\textbf{2.75}&\textbf{3.45}&\textbf{3.31}&\textbf{3.40}\\
			\bottomrule
		\end{tabular}
	}
\end{table*}

\subsection{Evaluation Metric}
Two metrics are adopted to evaluate the ACIL. The overall accuracy performance is evaluated by the \textit{average incremental accuracy} (or average accuracy) $\mathcal{\bar A}$ (\%): $\mathcal{\bar A} = \frac{1}{	K+1}{\sum}_{k=0}^{K}\mathcal{A}_{k}$ 
where $\mathcal{A}_{k}$ indicates the average test accuracy of the network incrementally trained at phase $k$ by testing it on $\mathcal{D}_{0:k}^{\text{test}}$.  The $\mathcal{\bar A}$ evaluates the overall performance of CIL algorithms. A higher $\mathcal{\bar A}$ score is preferred when evaluating CIL algorithms. The other evaluation metric is the \textit{forgetting rate} $\mathcal{F}$ (\%) defined in \cite{Mnemonics_2020_CVPR}: $
\mathcal{F} = A_{K}^{Z} - A_{0}^{Z}$
where $A_{k}^{Z}$ denotes the average accuracy at phase $k$ by testing it on $\mathcal{D}_{0}^{\text{test}}$. The forgetting rate reveals the degree to which a CIL method forgets the base classes. Hence, it is a good indicator to evaluate CIL's forgetting issue.

\begin{figure*}
	\centering
	\includegraphics[width=1\linewidth]{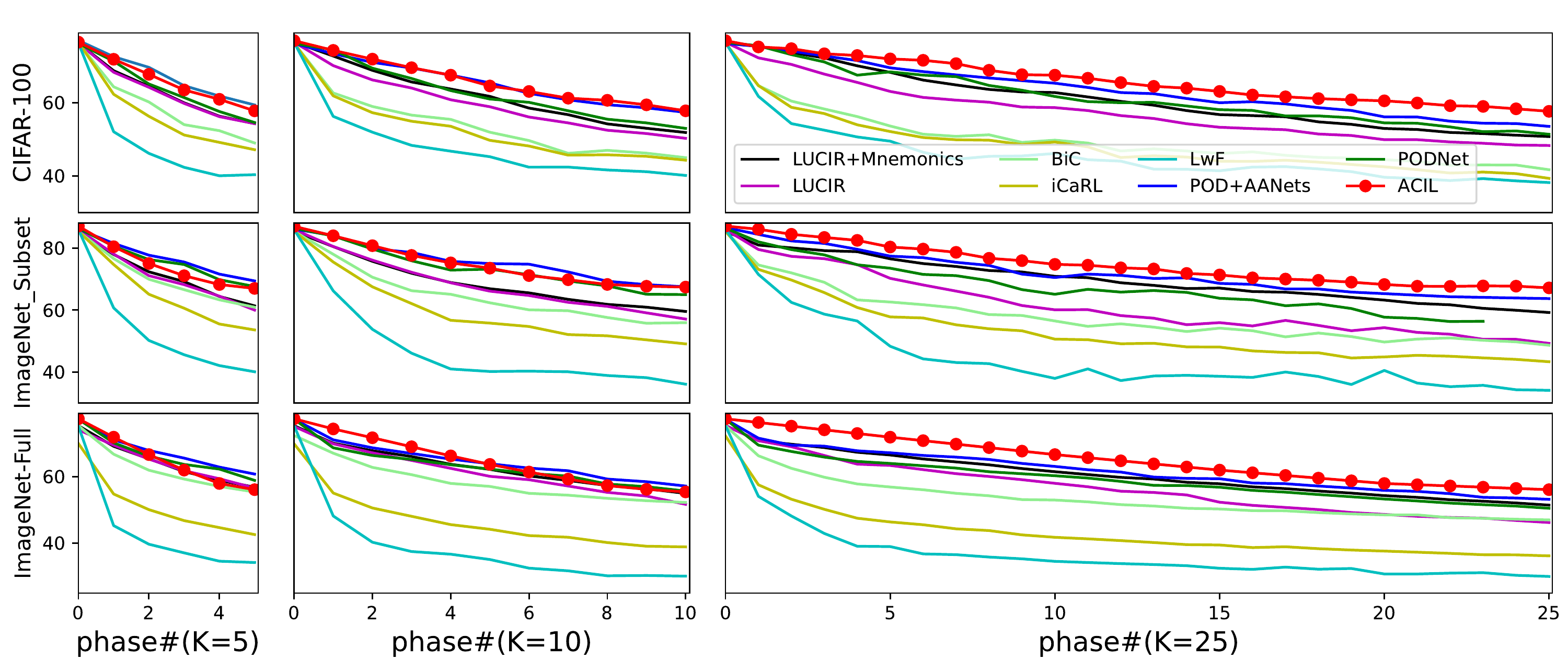}
	\caption{Avg. accuracy w.r.t. phase. The RMM curve is not included as its source code is only applicable for strict-memory settings.}
	\vspace{-5mm}
	\label{fig:curvephase}
\end{figure*}
\subsection{Result Comparison}
We tabulate the average incremental accuracy $\mathcal{\bar A}$ and the forgetting rate $\mathcal{F}$ from the compared methods in Table \ref{table_avg_acc}.  As shown in the upper panel, overall, the ``combo'' CIL techniques---techniques that combine more than one CIL methods---give very competitive $\mathcal{\bar A}$ scores. In particular, the ``POD+AANets+RMM'' combo, which incorporates PODNet \cite{podnet2020ECCV}, AANets \cite{AANet_2021_CVPR} into RMM \cite{RMM2021NeuriPS}, obtains the most competitive results that can be treated as current the state-of-the-art counterpart.

As shown in the upper panel of Table \ref{table_avg_acc}, the ACIL gives a slightly worse average accuracy for 5-phase CIL in general. However, ACIL's performance catches up with those of the state-of-th-arts as $K$ increases, and begins to lead for $K\ge 25$ (i.e., large-phase CIL scenarios). For instance, for 5-phase CIL, the ACIL gives an accuracy of 66.30\% on CIFAR-100, which is slightly worse than the results from several combo techniques such as the ``POD+AANets'' combo (66.31\%) by 0.01\% and the ``POD+AANets+RMM'' combo (68.36\%) by 2.06\%. However, for 25-phase CIL, the ACIL (with 65.95\%) outperforms these combo methods, e.g., outperforming the second best by 1.83\% (64.12\% from the ``POD+AANets+RMM'' combo). Such an overtaking pattern is naturally expected owing to ACIL's absolute memorization. That is, the accuracy of ACIL remains unchanged for different $K$ values, while other CIL methods experience various levels of forgetting issue that intensifies as $K$ increases. Note that there could be a very mild $\mathcal{\bar A}$ degradation from ACIL as $K$ increase (e.g., 66.30\%$\to$65.95). Although theoretically the ACIL should give identical results regardless of $K$, the possible mild drop is likely caused by quantization errors since large $K$ indicates more computation rounds hence more quantization operations (e.g., see TABLE VI \cite{brmp2021}).

This pattern on CIFAR-100 is quite consistent with those on ImageNet-Subset and ImageNet-Full. On ImageNet-Full, the ACIL begins to outperform the compared methods for $K\ge 10$, and  leads  (with 64.63\%) the second best result (63.08\% from the ``LUCIR+Mnemonics'' combo) by 1.55\% for 25-phase learning. On ImageNet-Subset, the ACIL falls behind the ``POD+AANets+RMM'' comb even for 25-phase learning, but the gap is very small (74.59\% v.s. 75.01\%). The overtaking pattern is further detailed in Figure \ref{fig:curvephase}.

In addition, we report the 50-phase CIL for the proposed ACIL. As expected, the average accuracies (i.e., $66.01\%$, $74.13\%$ and $64.35\%$ on CIFAR-100, ImageNet-Subset and ImageNet-Full) are very close to those trained with $K=5,10$ or $25$. This allows the ACIL to further outperform the compared methods that are not specializing in large-phase incremental problems. Although the PODNet also aims at large-phase problems, its performance cannot compete with ACIL's (57.98\% v.s. 66.01\% on CIFAR-100, and 62.48\% v.s. 74.13\% on ImageNet-Subset).

\textbf{Why ACIL Performs Well.} Conventionally, the analytic learning cannot compete with BP \cite{brmp2021}. However, if the feature extractor (e.g., CNN layers) is pre-trained with BP with the classifier head designed by analytic learning related techniques, the performance can catch up \cite{DAN2020TC}. Such a scenario fits well in the CIL procedure in this paper, explaining why our ACIL performs well.

The expansion size $d_{\text{(fe)}}$ from the FE process has a huge impact on the CIL performance. As plotted in Figure \ref{fig:fe}, the $\mathcal{\bar A}$ on ImageNet-100 and ImageNet-Full increases with larger $d_{\text{(fe)})}$. The performance on CIFAR-100 starts to decline for $d_{\text{(fe)})}>\text{10k}$, likely because the expansion ratio for ResNet-32 case is unreasonably large (e.g., $d_{\text{(fe)})}/d_{\text{(cnn)}}=\text{15k}/64$) compared with that of ResNet-18 ($\text{15k}/512$) on ImageNet datasets. As observed in Figure \ref{fig:fe}, the ACIL on ImageNet datasets should work better with even larger $d_{\text{(fe)})}$, but our 11GB GPU experiences memory leak. Still, the expansion up to $d_{\text{(fe)})}=\text{15k}$ allows the ACIL to give very competitive results (see Table \ref{table_avg_acc}).

The forgetting rate $\mathcal{F}$ is also presented in the bottom panel of Table \ref{table_avg_acc}. Our ACIL demonstrates the lowest $\mathcal{F}$ scores on all the three benchmark datasets. This is a further evidence supporting our absolute-memorization claim. Note that the absolute memorization does not lead to $\mathcal{F}=0$. Even a healthy joint learning would reduce the performance on the base classes. This can be explained by the example as follows. Let $\mathcal{M}_{\text{50}}$ and $\mathcal{M}_{\text{100}}$ be the two networks jointly trained on the base 50 classes and the 100 full classes from CIFAR-100 respectively. Testing $\mathcal{M}_{\text{100}}$ on the base dataset would still experience performance loss compared with that obtained by testing $\mathcal{M}_{\text{50}}$ on the base dataset, i.e., $\mathcal{F}>0$. Hence, although our ACIL perfectly remembers the pass samples, non-zero forgetting rate still applies when incrementally learning new classes. Nonetheless, the forgetting rate has been shown to be much lower than the existing CIL methods. For instance, for 5-phase learning on ImageNet-Full, the ``LUCIR+Mnemonics'' combo exhibits 13.63\% forgetting, but our ACIL only has 2.75\%! Yet, the low $\mathcal{F}$ score does not suggest strong resistance for learning new tasks since the average accuracy has been shown to be comparable to state-of-the-art results (see upper panel of Table \ref{table_avg_acc}). That is, the ACIL bears a relatively good stability-plasticity balance.

\textbf{Data Privacy Protection.} Apart from the competitive incremental accuracy, the ACIL is in strong support of data privacy. As indicated in Algorithm \ref{alg:acil}, the incremental learning surrenders any samples from previous tasks, allowing data privacy across learning phases or platforms. As shown in Table \ref{table_avg_acc}, privacy-preserving CIL (e.g., LwF) suffers much more intensively (e.g., 45.51\% from LwF of 25-phase CIL on CIFAR-100) than the replay-based CIL (e.g., 64.12\% from RMM combo of 25-phase CIL on CIFAR-100). Our ACIL maintains the privacy while providing comparable or better accuracy performance (e.g., 65.95\% of 25-phase CIL on CIFAR-100). Such a comfortable balance would certainly attract attention as we are living in a world that values and protects data privacy.

\begin{figure}
	\centering
	\includegraphics[width=1\linewidth]{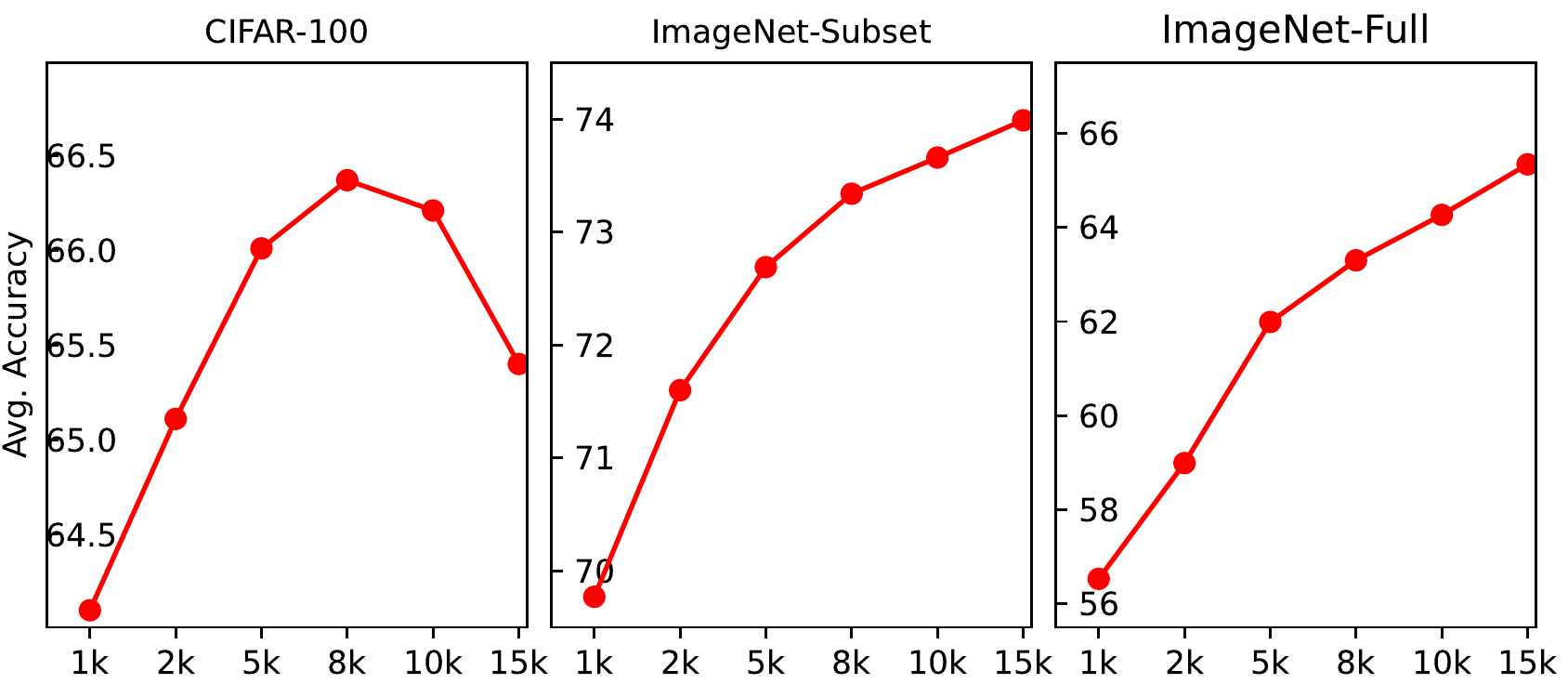}
	\vspace*{-8mm}
	\caption{The impact of expansion size $d_{\text{(fe)}}$. }
	\label{fig:fe}
\end{figure}

\begin{table}
	\centering
	\caption{Ablation study regarding expansion and regularization.}
	\label{table_ablation}
	{
		\begin{tabular}{c|ccc|c}
			\toprule
			\hline
			\multirow{2}{*}{FE process} & \multicolumn{3}{c|}{w/ regularization} &\multirow{2}{*}{$\mathcal{\bar A}$ (\%)} \\\cline{2-4}
			&$10^{-1}$&$10^{-2}$&$10^{-3}$&\\
			\hline
			$\times$& \checkmark &$\times$ &  $\times$&  52.99\%\\
			\checkmark& \checkmark & $\times$ &$\times$  &  \textbf{66.30}\%\\
			\checkmark& $\times$&\checkmark & $\times$ &  66.25\%\\
			\checkmark&$\times$ &$\times$&\checkmark &  66.23\%\\
			\checkmark&$\times$ &$\times$&$\times$ &  51.12\%\\
			\hline
		\end{tabular}
	}
\end{table}
\textbf{Memory for Storage.} The ACIL stores $\bm{R}_{k}$ instead of exemplars. As an example, for fix-exemplar setting, the memory used by storing $\bm{R}_{k}$ (8$k$) on CIFAR-100/CUB200-2011/ImageNet is $8k\times8k=64$ million (M) tensor elements, while other methods consume 6.1M/301.1M/3010.6M respectively (e.g., on ImageNet $224\times224\times3\times20\times1000\approx3010.6$M). This shows that our method is memory-friendly
to large-shaped image datasets (e.g., ImageNet).


\subsection{Ablation Study}\label{subsection_ablation_study}
In the proposed ACIL, the FE process governed by $d_{\text{(fe)}}$ and the regularization controlled by $\gamma$ are essential. To show this, we adopt an ablation study by conducting a $5$-phase CIL of ResNet-32 (with $d_{\text{(fe)}}=8\text{k}$) on CIFAR-100 to observe the performance shift w.r.t. these modules. As reported in Table \ref{table_ablation}, without the FE process (see the first two rows in Table \ref{table_ablation}), the average incremental accuracy experiences a sharp drop (e.g., 66.30\%$\to$52.99\%). The need for the FE process was enlightened by the fact that the analytic learning is naturally prone to under-fitting due to simple linear regression \cite{cpnet2021}. Widening the feature size helps to capture the missing discriminative information.

The regularization factor $\gamma$, on the other hand, plays an important role but behaves robustly during the CIL experiments. As shown rows 2-5 in Table \ref{table_ablation}, the ACIL performs robustly for a considerably wide range of $\gamma$ values (e.g., $10^{-3}$-$10^{-1}$). However, it would be unwise to remove the regularization as the ACIL could suffer from a strong accuracy reduction without its support (e.g., 66.30\%$\to$51.12\%).

\section{Conclusion}
In this paper, we have presented an analytic class-incremental learning (ACIL) which bears two valuable features (i.e., the absolute memorization and the data privacy protection) for addressing several existing limitations of class-incremental learning. The analytic learning has been incorporated as a key component to conduct incremental learning of new tasks in a recursive manner. Such a recursive learning style allows the ACIL to have absolute memorization. That is, the incremental learning of ACIL given present data would produce identical results to that of a joint learning which accesses both present and historical data, a property that has been theoretically validated. The recursive formulation has also the merit of not storing any samples from historical tasks, thus avoiding the breach of data privacy. Experiments have been conducted to validate our claims. Overall, our ACIL gives very competitive accuracy results. In particular, it outperforms the state-of-the-art methods for large-phase scenarios (e.g., incremental learning with 50 phases).

\bibliographystyle{IEEEtran}
\bibliography{cil_lib}

\end{document}